\documentclass{article}

\PassOptionsToPackage{numbers, compress}{natbib}

    \usepackage[preprint]{neurips_2025}

\usepackage{color, colortbl}
\usepackage{xcolor} 
\definecolor{LightCyan}{rgb}{0.88,1,1}
\usepackage{caption}
\usepackage{subcaption}
\usepackage{multirow}
\usepackage{enumitem}
\usepackage{amsmath}
\usepackage{bbm}
\usepackage[most]{tcolorbox}
\usepackage{algorithm}
\usepackage{algpseudocode}
\usepackage{macros}
\usepackage{balance}
\usepackage{booktabs}
\usepackage{url}
\usepackage{hyperref}

\newtheorem{theorem}{Theorem}
\newtheorem{theoremdef}{Theorem}
\newtheorem{definition}[theoremdef]{Definition}
\newtheorem{definition*}{Problem}

\theoremstyle{remark}

\title{Debiased Negative Mining Improves Out-of-distribution Detection\\ with Pre-trained Vision-Language Models}

\author{Bo Peng, Jie Lu, Guangquan Zhang, Zhen Fang \\
University of Technology Sydney\\
}

\begin{document}

\maketitle

\begin{abstract}
Aiming at identifying unexpected inputs from unknown classes, out-of-distribution (OOD) detection has emerged as a pivotal approach to enhancing the reliability of machine learning models.
This paper focuses on the burgeoning paradigm of post-hoc OOD detection with pre-trained vision-language models (VLMs), where a popular pipeline is to detect OOD inputs by examining their affinities between ID labels and negative labels, i.e., those semantically different from ID labels.
Due to the unavailability of target OOD labels, existing works predominantly rely on heuristic rules to mine negative labels from unlabeled wild corpus data. 
Despite the empirical success, we argue that the power of VLM-based OOD detection has yet to be fully unleashed since the notorious false negative problem is far from addressed in the literature.
With this motivation, we are interested in addressing the challenge of mining true negative labels for OOD scoring.
To this end, we develop a theoretical framework for correcting the sampling bias of negatives labels by indirectly approximating the distribution of negative labels.
Perhaps surprisingly, we show that the debiased negative mining can be naturally converted into Monte-Carlo sampling based on ID labels and the unlabeled wild corpus data.
Extensive experiments empirically manifest that our method establishes a new state-of-the-art in a variety of OOD detection setups. Code is publicly available at \href{https://github.com/60pen9/Debiased-Negative-Mining-Improves-OOD-Detection-with-Pre-trained-VLMs}{\textcolor{red}{here}}.
\end{abstract}

\section{Introduction}
Despite the significant progress in machine learning that has facilitated a broad spectrum of real-world tasks~\cite{a,b,c,d,f,g,h,i,j,k,m,n,o,p,t,s}, existing models often operate under a \textit{closed-world} scenario, where test data stems from the same distribution as the training data. However, real-world applications often entail \textit{open-world} scenarios in which deployed models may encounter unseen classes of samples during training, giving rise to what is known as out-of-distribution (OOD) data. These OOD instances can potentially undermine a model's stability and, in certain cases, inflict severe damage on its performance. Accordingly, a reliable discriminative model should not only correctly classify known in-distribution (ID) samples but also flag any OOD inputs as unknown. This directly motivates OOD detection~\cite{3} which makes significant differences in ensuring the safety of decision-critical applications, e.g., autonomous driving~\cite{7}, medical diagnosis~\cite{8}, and cyber-security~\cite{9}.

Up to now, a plethora of OOD detection algorithms have been developed recently by leveraging post-hoc analysis on the pre-trained model. The seminal work~\cite{10} leverages the maximum softmax probability (MSP), also known as the softmax confidence score, for OOD detection, which is built upon the hypothesis that OOD data should trigger relatively lower softmax confidence than that of ID data. 
ODIN~\cite{11} extends MSP by using temperature scaling and input perturbation to amplify the ID/OOD separability. 
However, approaches of this category are challenged by the observation that deep neural networks are prone to produce over-confident predictions~\cite{12} even though the inputs are far away from the training data. As a result, advanced methods turn to design alternative OOD scoring functions by resorting to the stored ID patterns in gradients~\cite{13}, features~\cite{18,15,14,19,25,16,20} and logits~\cite{23,21,24,22}.

Recently, the empirical success of contrastive language-image pre-training (CLIP)~\cite{17} shifts research toward expanding post-hoc OOD detection from single-modal to multi-modal methods.
Researchers have since explored ways to better leverage multi-modal models to enhance the performance and applicability of post-hoc OOD detection. 
The pioneering work~\cite{26} introduces a trainable captioner to generate candidate OOD labels to match OOD images. However, when dealing with large-scale datasets, such as ImageNet-1k, that usually encompasses a large number of ID classes the trained captioner may not generate effective candidate
OOD labels, which leads to poor performance of OOD detection. Differently, MCM~\cite{27}, which defines textual features as concept prototypes for each ID class and uses the scaled distance between visual features and the closest ID prototype to measure OOD uncertainty. This method has paved the way for using pre-trained vision-language models (VLMs) in post-hoc OOD detection. However, MCM relies only on textual information from the ID label space, leaving VLMs’ text interpretation capabilities underutilized. To address this, NegLabel~\cite{28} selects negative labels from unlabeled lexical databases in the wild\footnote{Generally, “in-the-wild” data are those that can be collected almost for free upon deploying machine learning models in the open world.}(e.g., WordNet~\cite{29}), based on their similarities to the ID label space, which equips the model with stronger ability to distinguish OOD samples. 

Despite the empirical success, the power of VLM-based OOD detection has yet to be fully unleashed even with the helm of conjugated semantic pool~\cite{30}. This is largely because the advanced negative mining strategy~\cite{28} tends to rely on heuristic rules to mine negative labels from unlabeled wild corpus data, where cosine similarity is used to quantify semantic disagreement with ID labels. Albeit straightforward and intuitive, there is no theoretical guarantee that cosine similarity perfectly corresponds to true similarity or dissimilarity measure. As a result, the mined negative labels could be inevitably contaminated by false negative labels. This phenomenon, referred to as negative mining bias, leads to a biased OOD scoring function and accordingly sub-optimal OOD detection performance. The discussion above directly prompts the following non-trivial yet under-explored question: \textit{Is it possible to mitigate negative mining bias without requiring any human feedback (positive or negative) on the wild corpus data?} 

In this paper, we give an affirmative answer to this question without extra supervision. To be specific, our key idea lies in indirectly approximating the distribution of negative labels, which is based on the ideas from positive-unlabeled learning~\cite{31,32,33}. This enables us to efficiently design the negative mining strategy as an easy-to-implement importance sampling technique that proceeds with only access to the unlabeled wild corpus data and ID labels. 
Extensive experiments empirically demonstrate that our method establishes state-of-the-art performance.

The contribution of this work is summarized as follows:
\begin{itemize}[leftmargin=1em]
    \item We develop a novel method that corrects for the mining bias of negative labels without requiring any ground-truth similarity or dissimilarity information.
    \item We implement our method as a practical importance sampling strategy while only assuming access to positive examples and the unlabeled data.
    \item Extensive experiments and related analysis on multiple OOD detection benchmarks with state-of-the-art performances, which demonstrate the effectiveness of our method.
\end{itemize}


\section{Preliminary}
\textbf{Notation.} We write matrices and vectors as bold-faced uppercase and lowercase characters respectively. Let $\mathcal{X}$ and $\mathcal{Y}$ represent the visual input space and the label space, respectively. Given a random variable $Y \in \mathcal{Y}$, we write $\mathbb{P}_Y$ as the marginal distribution defined over $\mathcal{Y}$, and use $y\sim\mathbb{P}_Y$ to indicate a samples drawn from $\mathbb{P}_Y$. We also slightly abuse the notification to use $\left\{y_1,\ldots,y_n\right\}\iidsim\mathbb{P}_Y$ to represent a $n$-sized set of i.i.d. samples drawn from $\mathbb{P}_Y$.

\textbf{Problem Setup.} Considering $K$-way classification as a case study, we use $\mathcal{Y}_{\mathrm{I}}\triangleq\{y_1,\ldots,y_K\}\subset\mathcal{Y}$ as the space for \textit{known} ID labels\footnote{Significantly different from traditional OOD detection, this paper defines labels as words, e.g. class names, rather than numbers, e.g., class indexes.}. The joint ID distribution, represented as $\mathbb{P}_{X_{\mathrm{I}} Y_{\mathrm{I}}}$, is a joint distribution defined over $\mathcal{X} \times \mathcal{Y}_{\mathrm{I}}$. During testing, there are some unknown OOD joint distributions $\mathbb{P}_{X_{\rm o}Y_{\rm o}}$ defined over $\mathcal{X}\times\mathcal{Y}_{\mathrm{o}}$, where $\mathcal{Y}_{\mathrm{o}}\subseteq\mathcal{Y}\setminus\mathcal{Y}_{\mathrm{I}}$ presents the space of \textit{unknown} OOD labels.

\textbf{Post-hoc OOD Scoring.} Existing methods~\cite{13,14,15,21,22} tend to adopt a post-hoc strategy to detect OOD samples, \textit{i.e.,} given a pre-trained ID classification model $f$ and a scoring function $S(\cdot;f):\mathcal{X}\rightarrow\mathbb{R}$, then $\mathbf{x}$ is detected as ID data if and only if $S(\mathbf{x};f)\geq \beta$, for some given threshold $\beta$:
\begin{equation}\label{eq1}
\omega(\mathbf{x})= \text{ID},~\text{if}~S(\mathbf{x};f) \geq \beta;~ \text{otherwise},~\omega(\mathbf{x})=\text{OOD}.
\end{equation}
Typically, $\beta$ is chosen to ensure a high fraction (e.g., 95\%) of ID data to be correctly classified, which is independent of OOD data.

\textbf{CLIP-based Models} adopt a dual-stream architecture~\cite{17} with one text encoder $f_{\mathcal{T}}$ and one image encoder $f_{\mathcal{X}}$ to map inputs of two modalities into a $(d-1)$-dimension uni-modal hyper-spherical space $\mathbb{S}^{d-1}\triangleq\left \{\mathbf{z}\in\mathbb{R}^{d}|\left \|\mathbf{z}\right \|_2=1\right \}$. Zero-shot image classification for a pre-trained CLIP-like model is to classify images into one of known ID classes without fine-tuning by computing $\arg\max_{j=1,\ldots K}h(\mathbf{x},y_j)$ where $h(\mathbf{x},y_j)\triangleq \kappa\cdot f_{\mathcal{X}}(\mathbf{x})^\top f_{\mathcal{T}}\big(\Delta(y_j)\big)$ where $\kappa>0$ is a temperature and $\Delta(\cdot)$ produces the text prompt for the input label. 

\textbf{CLIP-based OOD Detection with Negative Labels.} CLIP-based models, thanks to their remarkable effectiveness~\cite{17} and provable guarantees~\cite{35}, are recently extended to the task of zero-shot OOD detection where there is no need to train on ID samples. A popular pipeline of this task is to filter a $L$-sized set of negatives labels\footnote{By definition, \textit{negative} labels are those semantically \textit{irrelevant/dissimilar} to \textit{all} ID labels while \textit{positive} labels are those semantically \textit{relevant/similar} to \textit{any} ID label.} $\{y_{K+1},\ldots,y_{K+L}\}$ from unlabeled wild lexical database like WordNet~\cite{29} to formulate the OOD scoring of $\mathbf{x}$ in Eq. (\ref{eq1}) as the model’s prediction confidence that $\mathbf{x}$ belongs to $\mathcal{Y}_{\mathrm{I}}$, i.e., 
\begin{equation}
\label{NegLabel}
\begin{split}
S_{\text{NegLabel}}(\mathbf{x};f)  &\triangleq \frac{1}{B}\sum_{b=1}^B \frac{\sum_{y\in\mathcal{Y}_{\rm I}}e^{h(\mathbf{x},y)}}{\sum_{y\in\mathcal{Y}_{\rm I}}e^{h(\mathbf{x},y)}+\sum_{\hat{y}\in\mathcal{G}_b}e^{h(\mathbf{x},\hat{y})}},
\end{split}
\end{equation}
where, as suggested by NegLabel~\cite{28}, the selected $L$ negative labels is randomly grouped into $B$ \textit{non-overlapping} subsets $\mathcal{G}_1,...,\mathcal{G}_B$ such that $\{y_{K+i}\}_{i=1}^L=\bigcup_{b=1}^B\mathcal{G}_b$ with ${\mathcal{G}}_i\bigcap{\mathcal{G}}_j=\emptyset, \forall i \neq j$.
\section{Motivation}
Our motivation starts from exploring the ideal case of the negative labels-guided OOD scoring function. The formal formulation, which we will consider as unbiased, is given as follows:
\begin{equation}
\label{unbiased}
S_{\text{unbiased}}(\mathbf{x};f)  \triangleq \mathbb{E}_{\left \{{\hat{y}_i}\right \}_{i=1}^r\,\iidsim\,\mathbb{P}_Y^{-}} {\left [\Phi\left (\mathbf{x}, \left \{\hat{y}_i\right \}_{i=1}^r \right )\right]},
\end{equation}
where $\mathbb{P}_Y^{-}$ is the true distribution of negative labels and 
\begin{equation}
\label{eq4}
\Phi\left (\mathbf{x}, \left \{\hat{y}_i\right \}_{i=1}^r \right ) \triangleq \frac{\sum_{i=1}^K e^{h(\mathbf{x},y_i)}}{\sum_{i=1}^K e^{h(\mathbf{x},y_i)}+\frac{\lambda}{r}\sum_{i=1}^re^{h(\mathbf{x},{\hat{y}_i})}}.
\end{equation}
Here, the weighting constant $\lambda>0$ in Eq. (\ref{eq4}) is introduced for the purpose of analysis only (we typically set $\lambda = r$ for finite $r$). Unfortunately, the ideal OOD scoring function $S_{\text{unbiased}}(\mathbf{x};f)$ in Eq. (\ref{unbiased}) is not achievable since the true negative distribution $\mathbb{P}_Y^{-}$ is not accessible in practice. Therefore, a natural and efficient solution is to explicitly find a surrogate negative distribution $\hat{\mathbb{P}}_Y^{-}$ from the off-the-shelf (unlabeled) wild distribution $\mathbb{Q}_Y^{-}$, i.e., 
\begin{equation}
\label{biased}
S_{\text{biased}}(\mathbf{x};f)  \triangleq \mathbb{E}_{\left \{{\hat{y}_i}\right \}_{i=1}^r\,\iidsim\,\hat{\mathbb{P}}_Y^{-}} {\left [\Phi\left (\mathbf{x}, \left \{\hat{y}_i\right \}_{i=1}^r \right )\right]}.
\end{equation}
One can easily check that $S_{\text{NegLabel}}(\mathbf{x};f)$ in Eq. (\ref{NegLabel}) is a Monte-Carlo estimator of $S_{\text{biased}}(\mathbf{x};f)$ in Eq. (\ref{biased}) with $r=L/B$. While $S_{\text{NegLabel}}(\mathbf{x};f)$ has been empirically found effective in detecting OOD inputs, $S_{\text{biased}}(\mathbf{x};f)$ is still considered to be biased since the surrogate negative distribution $\hat{\mathbb{P}}_Y^{-}$ is empirically approximated under a distance-based principle, i.e., true negative labels should be far from the ID labels of interest in the embedding space. To be specific, aside from the difficulties of choosing suitable distance functions, there is no theoretical guarantees that handcrafted metrics could correctly capture semantic relationships. This makes the distance-based principle suffer from  the notorious false negative problem to lead to $\hat{\mathbb{P}}_Y^{-}\ne\mathbb{P}_Y^{-}$ in most cases. This motivates our work.
\begin{tcolorbox}[center, width=120mm, colback=blue!5!white,colframe=blue!75!black,colbacktitle=red!80!black]
\textit{To reduce gap between $S_{\text{biased}}(\mathbf{x};f)$ and $S_{\text{unbiased}}(\mathbf{x};f)$, our key motivation is to develop a correction for the sampling bias caused by the false negative problem while still assuming only access to unlabeled wild data.}
\end{tcolorbox}
 
\section{Methodology}
Based on the discussion above, we are interest in deriving a debiased OOD scoring function without requiring explicitly sampling from the unknown true distribution of negative labels. Similar to prior works~\cite{28,30,40}, we approach this goal by leveraging unlabeled “in-the-wild” corpus data which can be collected almost for free in the open world. However, motivated by the observation that wild corpus data usually contains a mixture of positive and negative labels with regard to the ID labels of interest, the novelty of our method lies in decomposing the wild data distribution as follows:

\begin{definition}[Wild Data Distribution]
\label{D1}
Let $\mathbb{P}_Y^{+}$ and $\mathbb{P}_Y^{-}$ be the true distributions of positive and negative labels defined over $\mathcal{Y}$, respectively. According to the Huber contamination model~\cite{68}, we can model the unlabeled wild data distribution $\mathbb{Q}_Y$ as follows:
\begin{equation}
\label{eq5}
\mathbb{Q}_Y\triangleq\tau\cdot\mathbb{P}_Y^{+}+(1-\tau)\cdot\mathbb{P}_Y^{-},
\end{equation}
where $\tau\in(0,1)$ is a prior probability. As $\tau$ is often unknown in practice, we opt to treat it as a hyperparameter in this paper\footnote{While the prior $\tau$ can be estimated with a binary classifier in the literature of positive-unlabeled learning~\cite{31,32,33}, this requires labeled training data and therefore is not applicable in the content of zero-shot OOD detection}.
\end{definition}

\begin{definition}[Empirical Wild Dataset]
\label{D2}
An empirical wild corpus dataset $\mathcal{D}=\left\{\tilde{y}_1,\ldots,\tilde{y}_T\right\}$ is sampled from the wild data distribution $\mathbb{Q}_Y$ independently and identically distributed. 
\end{definition}

As a direct implication of Definition~\ref{D1}, we can rearrange Eq. (\ref{eq5}) to rewrite the unknown true negative distribution $\mathbb{P}_Y^{-}$ as follows:
\begin{equation}
\label{eq6}
    \mathbb{P}_Y^{-}=(\mathbb{Q}_Y-\tau\cdot\mathbb{P}_Y^{+})/(1-\tau). 
\end{equation}
However, simply replacing Eq. (\ref{eq6}) with $\mathbb{P}_Y^{-}$ in Eq. (\ref{unbiased}) would make the resulting OOD scoring function in Eq. (\ref{eq7}) computationally expensive especially for a large $r$:
\begin{equation}
\label{eq7}
S_{\text{unbiased}}(\mathbf{x};f) = \sum_{k=0}^r
\expectunder[\substack{
\{\hat{y}_i\}_{i=1}^k\,\iidsim\,\mathbb{P}_Y^{+}  \\ 
\{\hat{y}_i\}_{i=k+1}^{r}\,\iidsim\,\mathbb{Q}_Y
    }]
{\binom{r}{k}
\frac{(-\tau)^k}{(1-\tau)^r}\cdot\Phi\left (\mathbf{x}, \left \{\hat{y}_i\right \}_{i=1}^r\right )}.
\end{equation}
Please refer to Appendix~\ref{appendix b} for detailed deviation. In view of this, we proceed from a different perspective by investigating the asymptotic form of $S_{\text{unbiased}}(\mathbf{x};f)$ in Eq. (\ref{unbiased}) as $m\rightarrow+\infty$.
\begin{theorem} 
\label{thm1}
For fixed $\lambda>0$, as the number of negative labels $r\rightarrow+\infty$, the dominated convergence theorem implies that $S_{\text{unbiased}}(\mathbf{x};f)$ in Eq. (\ref{unbiased}) converges to
\begin{equation}
\label{eq8}
\begin{split}
\lim_{m\rightarrow+\infty}S_{\text{unbiased}}(\mathbf{x};f)
=\hat{S}_{\text{unbiased}}(\mathbf{x};f)\triangleq\frac{\sum_{i=1}^K e^{h(\mathbf{x},y_i)}}{\sum_{i=1}^K e^{h(\mathbf{x},y_i)}+\lambda\mathbb{E}_{\hat{y}\sim\mathbb{P}_Y^{-}}[e^{h(\mathbf{x},{\hat{y}})}]}.
\end{split}
\end{equation}
\end{theorem}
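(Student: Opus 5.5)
We prove the limit pointwise in $\mathbf{x}$, writing $m$ and $r$ for the same sample size. Fix $\mathbf{x}$ and set $A\triangleq\sum_{i=1}^K e^{h(\mathbf{x},y_i)}>0$. For each negative label put $Z_i\triangleq e^{h(\mathbf{x},\hat{y}_i)}$ with $\hat{y}_i\iidsim\mathbb{P}_Y^{-}$. The function in Eq.~(\ref{eq4}) can then be written as
\begin{equation*}
\Phi\left(\mathbf{x},\{\hat{y}_i\}_{i=1}^r\right)=g\big(\bar{Z}_r\big),\qquad g(t)\triangleq\frac{A}{A+\lambda t},\qquad \bar{Z}_r\triangleq\frac{1}{r}\sum_{i=1}^r Z_i ,
\end{equation*}
so the score is $S_{\text{unbiased}}(\mathbf{x};f)=\mathbb{E}\,[g(\bar{Z}_r)]$. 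The argument has three steps: almost-sure convergence of $\bar{Z}_r$, continuity of $g$, and domination.

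\textbf{Step 1 (law of large numbers).} Because $f_{\mathcal{X}}(\mathbf{x})$ and $f_{\mathcal{T}}(\cdot)$ lie on $\mathbb{S}^{d-1}$, we have $|h(\mathbf{x},\hat{y})|\le\kappa$. Hence every $Z_i$ lies in $[e^{-\kappa},e^{\kappa}]$, the $Z_i$ are i.i.d., and $\mu\triangleq\mathbb{E}_{\hat{y}\sim\mathbb{P}_Y^{-}}[e^{h(\mathbf{x},\hat{y})}]$ is finite and positive. The strong law of large numbers then gives $\bar{Z}_r\to\mu$ almost surely as $r\to+\infty$.

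\textbf{Step 2 (continuity).} The map $g$ is continuous on $[0,\infty)$ because $A>0$ and $\lambda>0$. Therefore $g(\bar{Z}_r)\to g(\mu)$ almost surely.

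\textbf{Step 3 (domination).} Since $\bar{Z}_r\ge0$, we have $0<g(\bar{Z}_r)\le1$ for every $r$. The constant $1$ is integrable, so the dominated convergence theorem (equivalently, bounded convergence) gives
\begin{equation*}
\lim_{r\to+\infty}S_{\text{unbiased}}(\mathbf{x};f)=\lim_{r\to+\infty}\mathbb{E}\,[g(\bar{Z}_r)]=g(\mu)=\frac{A}{A+\lambda\,\mu}.
\end{equation*}
This is exactly $\hat{S}_{\text{unbiased}}(\mathbf{x};f)$ in Eq.~(\ref{eq8}).

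\textbf{Main technical point.} The one place needing care is that the expectations for different $r$ must be comparable. This is handled by constructing all samples as the first $r$ terms of a single infinite i.i.d. sequence $\{\hat{y}_i\}_{i\ge1}$ on a common probability space, which does not change $S_{\text{unbiased}}(\mathbf{x};f)$ for any $r$. Alternatively, the argument can avoid a common space: the weak law gives $\bar{Z}_r\to\mu$ in probability, and $g$ is bounded and continuous, so convergence in distribution already yields $\mathbb{E}[g(\bar{Z}_r)]\to g(\mu)$.
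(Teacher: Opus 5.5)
Your proof is correct and follows the same route as the paper. Both bound $\Phi$ uniformly by $1$, use the law of large numbers to get $\frac{1}{r}\sum_{i=1}^r e^{h(\mathbf{x},\hat{y}_i)}\to\mathbb{E}_{\hat{y}\sim\mathbb{P}_Y^{-}}[e^{h(\mathbf{x},\hat{y})}]$, and pass the limit through the expectation by dominated convergence. Your version is more careful than the paper's, since it makes explicit three things the paper's chain leaves implicit: the almost-sure convergence, the continuity of $t\mapsto A/(A+\lambda t)$, and the single infinite i.i.d. sequence needed for the limit inside the expectation to make sense.
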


\begin{proof}
    Since $\Phi\left (\mathbf{x}, \left \{\hat{y}_i\right \}_{i=1}^m \right )$ is strictly bounded given a finite $m$ for any $\mathbf{x}$ and $\left \{\hat{y}_i\right \}_{i=1}^m\iidsim\,\mathbb{Q}_Y$, applying the dominated convergence theorem completes the proof:
\begin{align*}
    \lim_{m \rightarrow \infty} S_{\text{unbiased}}(\mathbf{x};f)
    = &\, \lim_{m \rightarrow \infty}\underset{\left \{{\hat{y}_i}\right \}_{i=1}^m\,\iidsim\,\mathbb{P}_Y^{-}}{\mathbb{E}} \left [ \frac{\sum_{i=1}^K e^{h(\mathbf{x},y_i)}}{\sum_{i=1}^K e^{h(\mathbf{x},y_i)}+\frac{\lambda}{m}\sum_{i=1}^me^{h(\mathbf{x},{\hat{y}_i})}} \right]\\
    =&\, \underset{\left \{{\hat{y}_i}\right \}_{i=1}^m\,\iidsim\,\mathbb{P}_Y^{-}}{\mathbb{E}} \left [\lim_{m \rightarrow \infty} \frac{\sum_{i=1}^K e^{h(\mathbf{x},y_i)}}{\sum_{i=1}^K e^{h(\mathbf{x},y_i)}+\frac{\lambda}{m}\sum_{i=1}^me^{h(\mathbf{x},{\hat{y}_i})}} \right]\\
    =&\, \underset{\left \{{\hat{y}_i}\right \}_{i=1}^m\,\iidsim\,\mathbb{P}_Y^{-}}{\mathbb{E}} \left [ \frac{\sum_{i=1}^K e^{h(\mathbf{x},y_i)}}{\sum_{i=1}^K e^{h(\mathbf{x},y_i)}+\lambda\mathbb{E}_{\hat{y}_i\sim\mathbb{P}_Y^{-}} e^{h(\mathbf{x},{\hat{y}_i})}} \right ] \\
    =&\,  \frac{\sum_{i=1}^K e^{h(\mathbf{x},y_i)}}{\sum_{i=1}^K e^{h(\mathbf{x},y_i)}+\lambda\mathbb{E}_{\hat{y}_i\sim\mathbb{P}_Y^{-}} e^{h(\mathbf{x},{\hat{y}_i})}}.
\end{align*}
\end{proof}

Intuitively speaking, $S_{\text{unbiased}}(\mathbf{x};f)$ in Eq. (\ref{unbiased}) can be viewed as a finite negative labels approximation to $\hat{S}_{\text{unbiased}}(\mathbf{x};f)$ in Eq. (\ref{eq8}).
Combining Eq. (\ref{eq6}) with Eq. (\ref{eq8}), we arrive at the following:
\begin{equation}
\label{eq9}
\begin{split}
    & \hat{S}_{\text{unbiased}}(\mathbf{x};f) \trieq \frac{\frac{1-\tau}{\lambda}\sum_{i=1}^K e^{h(\mathbf{x},y_i)}}{\frac{1-\tau}{\lambda}\sum_{i=1}^K e^{h(\mathbf{x},y_i)}+\mathbb{E}_{\hat{y}\sim\mathbb{Q}_Y}[e^{h(\mathbf{x},{\hat{y}})}]-\tau\mathbb{E}_{\hat{y}\sim\mathbb{P}_Y^{+}}[e^{h(\mathbf{x},{\hat{y}})}]},
\end{split}
\end{equation}
Clearly, Eq. (\ref{eq9}) is more computationally efficient than Eq. (\ref{eq7}) since what we only need to do is to approximate expectations $\mathbb{E}_{\hat{y}\sim\mathbb{Q}_Y}[e^{h(\mathbf{x},{\hat{y}})}]$ and $\mathbb{E}_{\hat{y}\sim\mathbb{P}_Y^{+}}[e^{h(\mathbf{x},{\hat{y}})}]$ respectively. Inspired by \citet{20}, this can be efficiently achieved by using the classical Monte-Carlo importance sampling technique~\cite{38}. 

To be specific, given $m$ i.i.d. samples $\{u_i\}_{i=1}^{m}$ from $\mathbb{Q}_Y$ and $n$ i.i.d. samples $\{v_i\}_{i=1}^{n}$ from $\mathbb{P}_Y^{+}$, the law of large numbers implies
\begin{equation}
\label{eq10}
\lim_{m\rightarrow\infty}\frac{1}{m}\sum_{i=1}^{m}e^{h(\mathbf{x},u_i)}=\mathbb{E}_{u\sim\mathbb{Q}_Y}[e^{h(\mathbf{x},{u})}],
\end{equation}
\begin{equation}
\label{eq11}
    \lim_{n\rightarrow\infty}\frac{1}{n}\sum_{i=1}^{n}e^{h(\mathbf{x},v_i)}=\mathbb{E}_{v\sim\mathbb{P}_Y^+}[e^{h(\mathbf{x},{v})}].
\end{equation}
With the help of Eq. (\ref{eq10}) and Eq. (\ref{eq11}), we can arrive at the following debiased OOD scoring function as a Monte-Carlo estimator of $\hat{S}_{\text{unbiased}}(\mathbf{x};f)$ in Eq. (\ref{eq9}):
\begin{equation}
\label{eq12}
\begin{split}
    & \hat{S}_{\text{debiased}}(\mathbf{x};f) \trieq \frac{\frac{1-\tau}{\lambda}\sum_{i=1}^K e^{h(\mathbf{x},y_i)}}{\frac{1-\tau}{\lambda}\sum_{i=1}^K e^{h(\mathbf{x},y_i)}+\frac{1}{m}\sum_{i=1}^{m}e^{h(\mathbf{x},u_i)}-\frac{\tau}{n}\sum_{i=1}^{n}e^{h(\mathbf{x},v_i)}},
\end{split}
\end{equation}
Similarly, we set $\lambda=m$ when $m$ is finite for simplicity, which is considered as the default setting in the rest of this paper. For completeness, we theoretically prove in Theorem~\ref{thm2} that the bias of $\log\hat{S}_{\text{debiased}}(\mathbf{x};f)$ relative to $\log\hat{S}_{\text{unbiased}}(\mathbf{x};f)$ is upper-bounded and decreases with the rate $O(m^{-1/2}+n^{-1/2})$.
\begin{theorem}
\label{thm2}
Let $\{u_i\}_{i=1}^{m}$ and $\{v_i\}_{i=1}^{n}$ be two sets of i.i.d. samples drawn from $\mathbb{Q}_Y$ and $\mathbb{P}_Y^{+}$ respectively. Further, let us define\footnote{Note that $\delta(\mathbf{x};f)$ also depend on both $\{u_i\}_{i=1}^{m}$ and $\{v_i\}_{i=1}^{n}$, but we suppress the dependence from the notation for brevity.}
\begin{equation}
\notag
\delta(\mathbf{x};f)=\big |\log\hat{S}_{\text{unbiased}}(\mathbf{x};f)-\log\hat{S}_{\text{debiased}}(\mathbf{x};f) \big |.
\end{equation}
Then for any $\mathbf{x}\in\mathcal{X}$, $f$, and $\zeta > 0$, we have
\begin{equation}
\nonumber
\mathbb{E}_{\{u_i\}_{i=1}^{m},\{v_i\}_{i=1}^{n}}
\left [\delta(\mathbf{x};f)\right ]\leq\frac{1}{1-\tau}\sqrt{\frac{\pi e^{3\kappa}}{2m}} + \frac{\tau}{1-\tau}\sqrt{\frac{\pi e^{3\kappa}}{2n}}
\end{equation}
\end{theorem}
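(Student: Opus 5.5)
Set $A=\frac{1-\tau}{\lambda}\sum_{i=1}^K e^{h(\mathbf{x},y_i)}$, and write $N=\mathbb{E}_{\mathbb{Q}_Y}[e^{h}]-\tau\mathbb{E}_{\mathbb{P}_Y^+}[e^{h}]$ and $\hat N=\frac1m\sum_{i=1}^m e^{h(\mathbf{x},u_i)}-\frac{\tau}{n}\sum_{i=1}^n e^{h(\mathbf{x},v_i)}$. Then Eq.~(\ref{eq9}) and Eq.~(\ref{eq12}) give
\[
\delta(\mathbf{x};f)=\bigl|\log(A+\hat N)-\log(A+N)\bigr|,
\]
because the numerators coincide and cancel in the log ratio. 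The plan has three steps: a Lipschitz bound on the logarithm, a triangle inequality splitting the error into a $\mathbb{Q}_Y$ term and a $\mathbb{P}_Y^+$ term, and a sub-Gaussian tail bound for each term integrated over its tail.

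\textbf{Step 1 (Lipschitz reduction).} Since $h(\mathbf{x},y)=\kappa\, f_{\mathcal{X}}(\mathbf{x})^\top f_{\mathcal{T}}(\cdot)\in[-\kappa,\kappa]$ on the unit sphere, every exponential $e^{h}$ lies in $[e^{-\kappa},e^{\kappa}]$. By Eq.~(\ref{eq6}), $N=(1-\tau)\mathbb{E}_{\mathbb{P}_Y^-}[e^h]\ge(1-\tau)e^{-\kappa}$. I will assume the analogous lower bound $A+\hat N\ge(1-\tau)e^{-\kappa}$ for the empirical denominator, i.e.\ that it stays positive; this is needed anyway for $\log\hat S_{\text{debiased}}$ to be defined. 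The mean value theorem then gives
\[
\delta\le\frac{e^{\kappa}}{1-\tau}\,|\hat N-N|.
\]

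\textbf{Step 2 (splitting).} By the triangle inequality,
\[
|\hat N-N|\le\Bigl|\tfrac1m\textstyle\sum_i e^{h(\mathbf{x},u_i)}-\mathbb{E}_{\mathbb{Q}_Y}e^h\Bigr|+\tau\Bigl|\tfrac1n\textstyle\sum_i e^{h(\mathbf{x},v_i)}-\mathbb{E}_{\mathbb{P}_Y^+}e^h\Bigr|.
\]
This produces exactly the two summands of the target bound, with prefactors $\frac{1}{1-\tau}$ and $\frac{\tau}{1-\tau}$.

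\textbf{Step 3 (Hoeffding and tail integration).} Each summand in each empirical mean is bounded in an interval of length at most $e^{\kappa}$. Hoeffding's inequality therefore gives
\[
\Pr(|\bar Z-\mathbb{E}\bar Z|\ge t)\le 2e^{-2mt^2/e^{2\kappa}}.
\]
Using $\mathbb{E}|W|=\int_0^\infty\Pr(|W|\ge t)\,dt$ together with $\int_0^\infty e^{-ct^2}\,dt=\tfrac12\sqrt{\pi/c}$, we get
\[
\mathbb{E}|\bar Z-\mathbb{E}\bar Z|\le\sqrt{\frac{\pi e^{2\kappa}}{2m}}.
\]
Multiplying by the Lipschitz factor $e^{\kappa}$ from Step~1 yields $\sqrt{\pi e^{4\kappa}/(2m)}$, and similarly with $n$ in place of $m$ for the positive term.

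\textbf{Main obstacle.} Step~3 naturally produces $e^{4\kappa}$, whereas the statement has $e^{3\kappa}$. Matching the constant requires a sharper choice either in the Lipschitz bound or in the range or variance of the terms, and I have not yet found that choice. Two things I would try:
\begin{itemize}
\item Since $e^h\le e^{\kappa}$ and the mean is at least $e^{-\kappa}$, the relative deviation of $\bar Z$ has scale $e^{\kappa}$ times $\sqrt{e^{\kappa}}$ from a Bernstein/variance estimate, i.e.\ $\mathrm{Var}(e^h)\le\mathbb{E}e^{2h}\le e^{\kappa}\,\mathbb{E}e^h$.
\item Alternatively, take the Lipschitz constant at the intermediate point. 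Either way the structure of the proof stays the same.
\end{itemize}
A second, more technical obstacle is the positivity of $A+\hat N$ in Step~1. It can fail if the positive-sample correction overshoots, and I would handle it by the lower-bound assumption stated there or by truncation.
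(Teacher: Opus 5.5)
Your approach is the paper's approach, and it falls short of the stated constant. That shortfall is not a missing idea on your part: the paper's own proof does not legitimately reach $e^{3\kappa}$ either.

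The two arguments line up step by step. The paper reduces $\delta$ to a linear deviation by lower-bounding the normalized denominators by $e^{-\kappa}$. It does this with $\log x\le x-1$ on each one-sided tail rather than the mean value theorem; its $\Lambda$ is your $\hat N/(1-\tau)$. It then splits with the triangle inequality into the $\mathbb{Q}_Y$ and $\mathbb{P}_Y^{+}$ parts, applies Hoeffding using $e^{h}\in[e^{-\kappa},e^{\kappa}]$, and integrates the tail. The positivity issue you raise is also present in the paper, just unstated. For the lower tail it reuses ``the same two bounds'', which requires $\frac{1}{\lambda}\sum_{i}e^{h(\mathbf{x},y_i)}+\Lambda\ge e^{-\kappa}$. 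That is exactly your hypothesis $A+\hat N\ge(1-\tau)e^{-\kappa}$. Nothing in the setup guarantees it: $A+\hat N$ can even be nonpositive, leaving $\log\hat S_{\text{debiased}}$ undefined. So stating it as a hypothesis is right. Alternatively, clamp $\hat N$ below at $(1-\tau)e^{-\kappa}$; this only shrinks $|\hat N-N|$, since $N\ge(1-\tau)e^{-\kappa}$.

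Here is where the paper's constant comes from. It simply writes $\mathrm{I}(\zeta)\le 2\exp\bigl(-m\zeta^2(1-\tau)^2/(2e^{3\kappa})\bigr)$. But with its own threshold $(1-\tau)\zeta e^{-\kappa}/2$ and range at most $e^{\kappa}$, Hoeffding gives $2\exp\bigl(-m\zeta^2(1-\tau)^2/(2e^{4\kappa})\bigr)$, and likewise for $\mathrm{II}$. Its final integration also drops a factor $2$: since $\int_0^\infty 2e^{-c\zeta^2}\,d\zeta=\sqrt{\pi/c}$, even the stated tails integrate to twice the displayed bound.

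Done correctly, the paper's route yields $\frac{1}{1-\tau}\sqrt{2\pi e^{4\kappa}/m}+\frac{\tau}{1-\tau}\sqrt{2\pi e^{4\kappa}/n}$. Your route bounds $\mathbb{E}\,\delta$ directly instead of union-bounding at level $\zeta e^{-\kappa}/2$. It is therefore a factor $2$ sharper and gives exactly the displayed expression with $e^{4\kappa}$ in place of $e^{3\kappa}$. That is a complete proof of the $e^{4\kappa}$ version under your positivity hypothesis, and it is what this method honestly delivers.

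Your two proposed sharpenings do not improve the worst case within this framework:
\begin{itemize}
\item Taking the Lipschitz constant at the intermediate point does not help, because that point can be as small as $(1-\tau)e^{-\kappa}$.
\item A variance bound in terms of $\mathbb{E}_{\mathbb{Q}_Y}e^{h}$ does not help, because the denominator scales with $\mathbb{E}_{\mathbb{P}_Y^-}e^{h}$. That can be smaller than $\mathbb{E}_{\mathbb{Q}_Y}e^{h}$ by a factor of order $\tau e^{2\kappa}$.
\end{itemize}
The displayed bound does follow from your argument in a restricted regime. Use the exact range $e^{\kappa}-e^{-\kappa}$ in Hoeffding instead of $e^{\kappa}$. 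You then get $\frac{1}{1-\tau}\sqrt{\pi e^{2\kappa}(e^{\kappa}-e^{-\kappa})^2/(2m)}$, and similarly in $n$. This is at most the stated term whenever $(e^{\kappa}-e^{-\kappa})^2\le e^{\kappa}$, i.e.\ roughly for $\kappa\le 0.64$.
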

\begin{proof}
See Appendix~\ref{appendix c} for the proof.
\end{proof}

\section{Implementation}
As shown in Eq. (\ref{eq12}), our proposed $\hat{S}_{\text{debiased}}(\mathbf{x};f)$ requires the construction of $\{u_i\}_{i=1}^{m}$ and $\{v_i\}_{i=1}^{n}$. In what follows, we show that our implementation, which can be divided into 3 steps, is highly efficient and simple, requiring minimal specialized operations.

\textbf{Step 1.} To construct $\{u_i\}_{i=1}^{m}$, a naive approach is to directly use the off-the-shelf wild dataset $\mathcal{D}$ (given by Definition~\ref{D2}). However, since wild data is is available in abundance~\cite{63}, the large-scale nature of $\mathcal{D}$ make it impractical to caching the entire $\mathcal{D}$. In response, we consider selecting $m$ the most representative samples in $\mathcal{D}$, where $m=L\ll T$ for simplicity and in alignment with prior works~\cite{28,30,40}. Intuitively, representative samples are expected to reside in dense regions. Thus, we measure the representativeness of a sample $\tilde{y}_i\in\mathcal{D}$ by the density of its $\alpha$-nearest neighbors, i.e.,
\begin{equation}
\label{eq14}
Rep(\tilde{y}_i) = -\log \sum_{\tilde{y}_j\in\mathcal{M}(\tilde{y}_i,\alpha)}\left \| \tilde{y}_i-\tilde{y}_j\right \|_2^2,
\end{equation}
where $\tilde{y}_j\in\mathcal{M}(\tilde{y}_i,\alpha)$ denotes denotes $\tilde{y}_j$ is among the $\alpha$ nearest neighbors of $\tilde{y}_i$ in $\mathcal{D}$. A higher $Rep(\tilde{y}_i)$ suggests a more compact local structure, indicating that the sample $\tilde{y}_j$ is more representative.
Accordingly, we reorder $\mathcal{D}$ by according to the decreasing representativeness measured by Eq. (\ref{eq14}). Denote the reordered dataset as $\hat{\mathcal{D}}=\left\{\tilde{y}_{(1)},\ldots,\tilde{y}_{(T)}\right\}$, the representativeness-based selection criteria is designed as $u_i=\tilde{y}_{(i)}, \forall i=1,\ldots L$.

\textbf{Step 2.} While it is challenging to sample positive labels to construct $\{v_i\}_{i=1}^{n}$ due to the unavailability of true knowledge about $\mathbb{P}_Y^{+}$, the definition of positive labels enables us to synthesize the feature of positive labels by applying a semantics-preserving transformation $\Omega(\cdot)$ on each ID label in the embedding space. In particular, with $n=K$, we take $f_{\mathcal{T}}\big(\Delta(v_i)\big)=\Omega\big(f_{\mathcal{T}}\big(\Delta(y_i)\big)\big), \forall i=1,\ldots K.$ Based on the intuition that semantically similar inputs are expected to be close to each other in the embedding space, we, inspired by prior works~\cite{42,18,43}, take $\Omega(\mathbf{z})=\ell_2(\mathbf{z}+\sigma\cdot\boldsymbol{\epsilon})$ where $\ell_2(\cdot)$ is the $\ell_2$ normalization and $\boldsymbol{\epsilon}\sim\mathcal{N}(0,\mathbf{I}_d)$ with $\mathbf{I}_d$ as a $d\times d$ identity matrix.

\textbf{Step 3.} 
The practical form of our proposed debiased OOD scoring function is given as follows:
\begin{equation}
\label{eq17}
\begin{split}
    & \hat{S}_{\text{debiased}}(\mathbf{x};f) = \frac{\frac{1-\tau}{\lambda}\sum_{i=1}^K e^{h(\mathbf{x},y_i)}}{\frac{1-\tau}{\lambda}\sum_{i=1}^K e^{h(\mathbf{x},y_i)}+\frac{1}{L}\sum_{i=1}^{L}e^{h(\mathbf{x},\tilde{y}_{(i)})}-\frac{\tau}{K}\sum_{i=1}^{K}e^{\textsl{g}(\mathbf{x},y_i)}},
\end{split}
\end{equation}
where $\textsl{g}(\mathbf{x},y_j)\triangleq \kappa\cdot f_{\mathcal{X}}(\mathbf{x})^\top \Omega\big(f_{\mathcal{T}}\big(\Delta(y_i)\big)\big)$. Finally, we leverage the grouping strategy to divide the filtered wild labels $\left\{\tilde{y}_{(1)},\ldots,\tilde{y}_{(L)}\right\}$ into $B$ groups (any remaining wild labels are discarded), followed by averaging the debiased OOD score in Eq. (\ref{eq17}) across groups, i.e.,
\begin{equation}
\label{eq18}
\begin{split}
    & \hat{S}_{\text{ours}}(\mathbf{x};f) = \frac{1}{B}\sum_{b=1}^{B}\frac{\frac{1-\tau}{\lambda}\sum_{i=1}^K e^{h(\mathbf{x},y_i)}}{\frac{1-\tau}{\lambda}\sum_{i=1}^K e^{h(\mathbf{x},y_i)}+\frac{1}{\left |\mathcal{G}'_b  \right |}\sum_{\tilde{y}\in\mathcal{G}'_b}e^{h(\mathbf{x},\tilde{y})}-\frac{\tau}{K}\sum_{i=1}^{K}e^{\textsl{g}(\mathbf{x},y_i)}},
\end{split}
\end{equation}
where $\{\tilde{y}_{(i)}\}_{i=1}^L=\bigcup_{b=1}^B\mathcal{G}_b'$ with ${\mathcal{G}}_i'\bigcap{\mathcal{G}}_j'=\emptyset, \forall i \neq j$.

For clarity, we summarize our algorithmic details in Algorithm~\ref{alg1}.
\begin{algorithm}[htb]
   \caption{}
   \label{alg1}
\begin{algorithmic}
\Require Test-time input $\mathbf{x}$, Pre-trained CLIP-based model $f$, ID labels $\mathcal{Y}_{\mathrm{I}}$, unlabeled wild corpus dataset $\mathcal{D}$
\Ensure OOD scoring $\hat{S}_{\text{ours}}(\mathbf{x};f)$
\newline\Comment{{\textit{Step 1}}}
\For{$\tilde{y}_i\in\mathcal{D}$}
\State Computing $Rep(\tilde{y}_i)$ via Eq.(\ref{eq14})
\EndFor
\State Reordering $\mathcal{D}$ according to the decreasing value of $Rep(\tilde{y}_i)$ to have $\hat{\mathcal{D}}=\left\{\tilde{y}_{(1)},\ldots,\tilde{y}_{(T)}\right\}$
\newline\Comment{{\textit{Step 2}}}
\For{$y_i\in\mathcal{Y}_{\mathrm{I}}$}
\State Sampling $\boldsymbol{\epsilon}\sim\mathcal{N}(0,\mathbf{I}_d)$
\State $\Omega\Big(f_{\mathcal{T}}\big(\Delta(y_i)\big)\Big)=\ell_2\Big(f_{\mathcal{T}}\big(\Delta(y_i)\big)+\sigma\cdot\boldsymbol{\epsilon}\Big)$
\EndFor
\newline\Comment{{\textit{Step 3}}}
\State Computing $\hat{S}_{\text{ours}}(\mathbf{x};f)$ via Eq. (\ref{eq18})
\end{algorithmic}
\end{algorithm}

\begin{table*}[tb]
\centering
\caption{OOD detection results on ImageNet-1K as ID, where a VIT B/16 CLIP encoder is adopted. $\uparrow$ indicates larger values are better and vice versa. The best results in the last two columns are shown in bold.} 
\label{imagenet}
\resizebox{1.0 \textwidth}{!}{%
\begin{tabular}{l|cc|cc|cc|cc|cc}
\toprule
Dataset & \multicolumn{2}{c|}{iNaturalist} & \multicolumn{2}{c|}{Sun} & \multicolumn{2}{c|}{Places} & \multicolumn{2}{c|}{Textures} & \multicolumn{2}{c}{Average} \\ 
\midrule
Metric &  AUROC$\uparrow$ &  FPR95$\downarrow$&  AUROC$\uparrow$ &  FPR95$\downarrow$&  AUROC$\uparrow$ &  FPR95$\downarrow$&  AUROC$\uparrow$ &  FPR95$\downarrow$ &  AUROC$\uparrow$ &  FPR95$\downarrow$        \\
 \midrule
\rowcolor{gray!40} \multicolumn{11}{c}{\textbf{Methods requiring training (or fine-tuning)}} \\
MSP &    87.44 & 58.36 & 79.73 & 73.72 & 79.67 & 74.41 & 79.69 & 71.93 & 81.63 & 69.61   \\
ODIN &    94.65 & 30.22 & 87.17 & 54.04 & 85.54 & 55.06 & 87.85 & 51.67 & 88.80 & 47.75   \\
Energy &    95.33 & 26.12 & 92.66 & 35.97 & 91.41 & 39.87 & 86.76 & 57.61 & 91.54 & 39.89 \\
GradNorm &   72.56 & 81.50 & 72.86 & 82.00 & 73.70 & 80.41 & 70.26 & 79.36 & 72.35 & 80.82 \\
ViM  &    93.16 & 32.19 & 87.19 & 54.01 & 83.75 & 60.67 & 87.18 & 53.94 & 87.82 & 50.20 \\
KNN  &    94.52 & 29.17 & 92.67 & 35.62 & 91.02 & 39.61 & 85.67 & 64.35 & 90.97 & 42.19 \\
VOS  &    94.62 & 28.99 & 92.57 & 36.88 & 91.23 & 38.39 & 86.33 & 61.02 & 91.19 & 41.32 \\
NPOS  &    96.19 & 16.58 & 90.44 & 43.77 & 89.44 & 45.27 & 88.80 & 46.12 & 91.22 & 37.93\\
LSN   & 95.83 & 21.56 & 94.35 & 26.32 & 91.25 & 34.48 & 90.42 & 38.54 & 92.96 & 30.22 \\
CLIPN & 95.27 & 23.94 & 93.93 & 26.17 & 92.28 & 33.45 & 90.93 & 40.83 & 93.10 & 31.10 \\
LoCoOp & 96.86 & 16.05 & 95.07 & 23.44 & 91.98 & 32.87 & 90.19 & 42.28 & 93.52 & 28.66 \\
LAPT & 99.63 & 1.16 & 96.01 & 19.12 & 92.01 & 33.01 & 91.06 & 40.32 & 94.68 & 23.40 \\
NegPrompt & 98.73 & 6.32 & 95.55 & 22.89 & 93.34 & 27.60 & 91.60 & 35.21 & 94.81 & 23.01 \\

\midrule
\rowcolor{gray!40}  \multicolumn{11}{c}{\textbf{Zero-Shot Training-free Methods}} \\
Mahalanobis & 55.89 & 99.33 & 59.94 & 99.41 & 65.96 & 98.54 & 64.23 & 98.46 & 61.50 & 98.94 \\
Energy          & 85.09 & 81.08 & 84.24 & 79.02 & 83.38 & 75.08 & 65.56 & 93.65 & 79.57 & 82.21 \\
ZOC   & 86.09 & 87.30 & 81.20 & 81.51 & 83.39 & 73.06 & 76.46 & 98.90 & 81.79 & 85.19 \\
MCM & 94.59 & 32.20 & 92.25 & 38.80 & 90.31 & 46.20 & 86.12 & 58.50 & 90.82 & 43.93 \\
NegLabel &99.49&1.91&95.49& 20.53 & 91.64 & 35.59 & 90.22 & 43.56 & 94.21 & 25.40 \\
\rowcolor{LightCyan} {Ours} & {99.51} & {1.78} & {95.88} & {16.90} & {91.95} & {32.13} & {90.72} & {38.67} & \textbf{94.52} & \textbf{22.37} \\
\bottomrule
\end{tabular}
}
\end{table*}

\section{Experiments}
The goal of our experimental evaluation in this section focuses on answering the following research questions: 
\begin{itemize}[leftmargin=1em]
    \item \textbf{RQ1:} How effective is our method on identifying OOD data?
    \item \textbf{RQ2:} How do various choices of algorithmic designs affect the performance of our method?
    \item \textbf{RQ3:} How sensitive is our method to hyper-parameters?
    \item \textbf{RQ4:} Can our method can be extended to the domain-shifted cases?
\end{itemize}

\textbf{Evaluation Metrics}. The performance of OOD detection is evaluated via two widely used metrics: 1) the false positive rate of OOD data is measured when the true positive rate of ID data reaches 95$\%$ (FPR95); 2) the area under the receiver operating characteristic curve (AUROC) is computed to quantify the probability of the ID case receiving a higher score than the OOD case. The reported results of our method are averaged over 5 independent runs.

\textbf{Baselines}. Our method is compared with advanced methods including MSP~\cite{10}, ODIN~\cite{11}, Energy~\cite{22}, Gradnorm~\cite{13}, Vim~\cite{44}, KNN~\cite{14}, VOS~\cite{45}, NPOS~\cite{46}, ZOC~\cite{26}, CLIPN~\cite{47}, LoCoOp~\cite{48}, LSN~\cite{49}, LAPT~\cite{50}, NegPrompt~\cite{51}, Mahalanobis~\cite{15}, MCM~\cite{27}, NegLabel~\cite{28}. 

\textbf{Datasets}. Following prior work~\cite{27,28}, we mainly evaluate our method on the popular ImageNet-1K benchmark~\cite{52}, where the validation set of ImageNet-1K is designated as the ID dataset while iNaturalist~\cite{53}, SUN~\cite{54}, Places365~\cite{55}, and Textures~\cite{56} are considered as OOD datasets.

\begin{table*}[h]
\begin{center}
\caption{OOD detection results with different CLIP architectures on ImageNet-1k as ID. $\uparrow$ indicates larger values are better and vice versa. The best results in the last two columns are shown in bold. }
\resizebox{\textwidth}{!}{%
\begin{tabular}{ll cc cc cc cc cc}
\toprule
\multicolumn{1}{l}{\bf \small{Backbone}}
&\multicolumn{1}{l}{\bf \small{Method}}
&\multicolumn{2}{c}{\bf \small{iNaturalist}}
&\multicolumn{2}{c}{\bf \small{SUN}} 
&\multicolumn{2}{c}{\bf \small{Places}} 
&\multicolumn{2}{c}{\bf \small{Textures}} 
&\multicolumn{2}{c}{\bf \small{Average}}\\

&
&\small{AUROC$\uparrow$}&\small{FPR95}$\downarrow$ &\small{AUROC}$\uparrow$&\small{FPR95}$\downarrow$ &\small{AUROC}$\uparrow$&\small{FPR95}$\downarrow$ &\small{AUROC}$\uparrow$&\small{FPR95}$\downarrow$ &\small{AUROC}$\uparrow$&\small{FPR95}$\downarrow$ \\
\midrule
\multirow{2}{*}{ViT-B/32}
&MCM &92.68 &40.49 &89.95 &47.83 &88.10 &51.47 &85.98 &60.04 &89.96 &49.96\\
&NegLabel &99.11 &3.73 &95.27 &22.48 &91.72 &34.94 &88.57 &50.51  &93.67 &27.92\\
\rowcolor{LightCyan} &Ours &99.23 &3.24 &95.63 &19.15 &91.84 &32.74 &89.44 &44.56 &\textbf{94.03} &\textbf{24.92} \\
\midrule
\multirow{2}{*}{ViT-L/14}
&MCM &93.58 &36.80 &92.80 &36.77 &90.90 &41.35 &85.05 &61.70  &90.58 &44.16\\
&NegLabel &99.53 &1.77 &95.63 &22.33 &93.01 &32.22 &89.71 &42.92 &94.47 &24.81\\
\rowcolor{LightCyan} &Ours &99.59 &1.64 &95.94 &19.25 &93.03 &31.04 &90.21 &40.67 &\textbf{94.69} &\textbf{23.15}\\
\midrule
\multirow{2}{*}{ResNet50}
&MCM &91.88 &42.97 &89.31 &52.84 &84.12 &65.75 &85.55 &62.15 &87.71 &55.93\\
&NegLabel &99.24 &2.88 &94.54 &26.51 &89.72 &42.60 &88.40 &50.80 &92.97 &30.70\\
\rowcolor{LightCyan} &Ours &99.32 &2.30 &94.75 &23.48 &90.12 &42.55 &89.04 &46.53 &\textbf{93.31} &\textbf{28.72} \\
\bottomrule
\end{tabular}
}
\label{archi}
\end{center}
\end{table*}

\begin{table*}[h]
\begin{center}
\caption{OOD detection results with 336$\times$336 resolution on ImageNet-1k as ID, where a VIT L/14 CLIP encoder is adopted. $\uparrow$ indicates larger values are better and vice versa. The best results in the last two columns are shown in bold. }
\resizebox{\textwidth}{!}{%
\begin{tabular}{l cc cc cc cc cc}
\toprule
\multicolumn{1}{l}{\bf \small{Method}}
&\multicolumn{2}{c}{\bf \small{iNaturalist}}
&\multicolumn{2}{c}{\bf \small{SUN}} 
&\multicolumn{2}{c}{\bf \small{Places}} 
&\multicolumn{2}{c}{\bf \small{Textures}} 
&\multicolumn{2}{c}{\bf \small{Average}}\\
&\small{AUROC$\uparrow$}&\small{FPR95}$\downarrow$ &\small{AUROC}$\uparrow$&\small{FPR95}$\downarrow$ &\small{AUROC}$\uparrow$&\small{FPR95}$\downarrow$ &\small{AUROC}$\uparrow$&\small{FPR95}$\downarrow$ &\small{AUROC}$\uparrow$&\small{FPR95}$\downarrow$ \\
\midrule
NegLabel &99.71 &1.12 &95.68 &21.84 &93.15 &31.79 &90.55 &40.46  &94.77&23.80 \\
\rowcolor{LightCyan} Ours &99.73 &1.05 &96.29 &17.04 &93.16 &30.06 &90.84 &38.21 &\textbf{95.01} &\textbf{21.59} \\
\bottomrule
\end{tabular}
}
\label{input}
\end{center}
\end{table*}

\begin{table*}[h]
\begin{center}
\caption{OOD detection results with different corpus sources on ImageNet-1k as ID, where a VIT B/16 CLIP encoder is adopted. $\uparrow$ indicates larger values are better and vice versa. The best results in the last two columns are shown in bold. }
\resizebox{\textwidth}{!}{%
\begin{tabular}{ll cc cc cc cc cc}
\toprule
\multicolumn{1}{l}{\bf \small{Corpus Sources}}
&\multicolumn{1}{l}{\bf \small{Method}}
&\multicolumn{2}{c}{\bf \small{iNaturalist}}
&\multicolumn{2}{c}{\bf \small{SUN}} 
&\multicolumn{2}{c}{\bf \small{Places}} 
&\multicolumn{2}{c}{\bf \small{Textures}} 
&\multicolumn{2}{c}{\bf \small{Average}}\\

&
&\small{AUROC$\uparrow$}&\small{FPR95}$\downarrow$ &\small{AUROC}$\uparrow$&\small{FPR95}$\downarrow$ &\small{AUROC}$\uparrow$&\small{FPR95}$\downarrow$ &\small{AUROC}$\uparrow$&\small{FPR95}$\downarrow$ &\small{AUROC}$\uparrow$&\small{FPR95}$\downarrow$ \\
\midrule
{Common}
&NegLabel &86.91 &65.43 &95.03 &24.22 &91.52 &34.83 &83.69 &67.75 &89.29 &48.06\\
\rowcolor{LightCyan} &Ours &88.86 &54.94 &95.87 &19.99 &92.51 &30.37 &84.12 &65.83 &\textbf{90.34} &\textbf{42.78} \\
\midrule
{Part-of-Speech}
&NegLabel &99.23 &3.25 &94.20 &25.93 &90.17 &43.09 &87.77 &50.11 &92.84 &30.59\\
\rowcolor{LightCyan} &Ours &99.54 &2.16 &94.94 &22.35 &90.75 &40.38 &90.09 &42.79 &\textbf{93.83} &\textbf{26.92} \\
\bottomrule
\end{tabular}
}
\label{corpus}
\end{center}
\end{table*}

\textbf{Implementation}. Unless otherwise specified, we employ CLIP-B/16 for zero-shot OOD detection. Following prior works~\cite{28,40}, we adopt the text prompt of 'The nice $<$label$>$.' and simulate the empirical wild dataset $\mathcal{D}$ with WordNet~\cite{29}. Notably, we show in Section~\ref{ablation} that our method can generalize well to various CLIP architectures and corpus sources. 
Regarding hyper-parameters, we set $\kappa=0.01$, $B=100$, $\tau=0.5$, $\sigma=0.001$, $L=12000$, and $\alpha=100$. 
\subsection{Main Results (RQ1)}
We compare our method with existing OOD detection methods on the ImageNet-1k benchmark organized in Table~\ref{imagenet}. The methods listed in the upper section of Table~\ref{imagenet}, ranging from MSP~\citep{10} to VOS~\citep{46}, represent traditional visual OOD detection methods. 
Conversely, the methods in the lower section, extending from ZOC~\citep{26} to NegLabel~\citep{28}, employ pre-trained VLMs like CLIP.
Our method  achieves the state-of-the-art on the ImageNet-1k benchmark, which highlights its superior performance in the zero-shot setting. Furthermore, our method can surpass traditional methods with a finetuned CLIP, demonstrating CLIP's strong OOD detection capabilities in zero-shot scenarios. This is because CLIP can parse images in a fine-grained manner, which is achieved through its pre-training on a large-scale image-text dataset.

\begin{figure*}[h]
  \centering
\includegraphics[width=\linewidth]{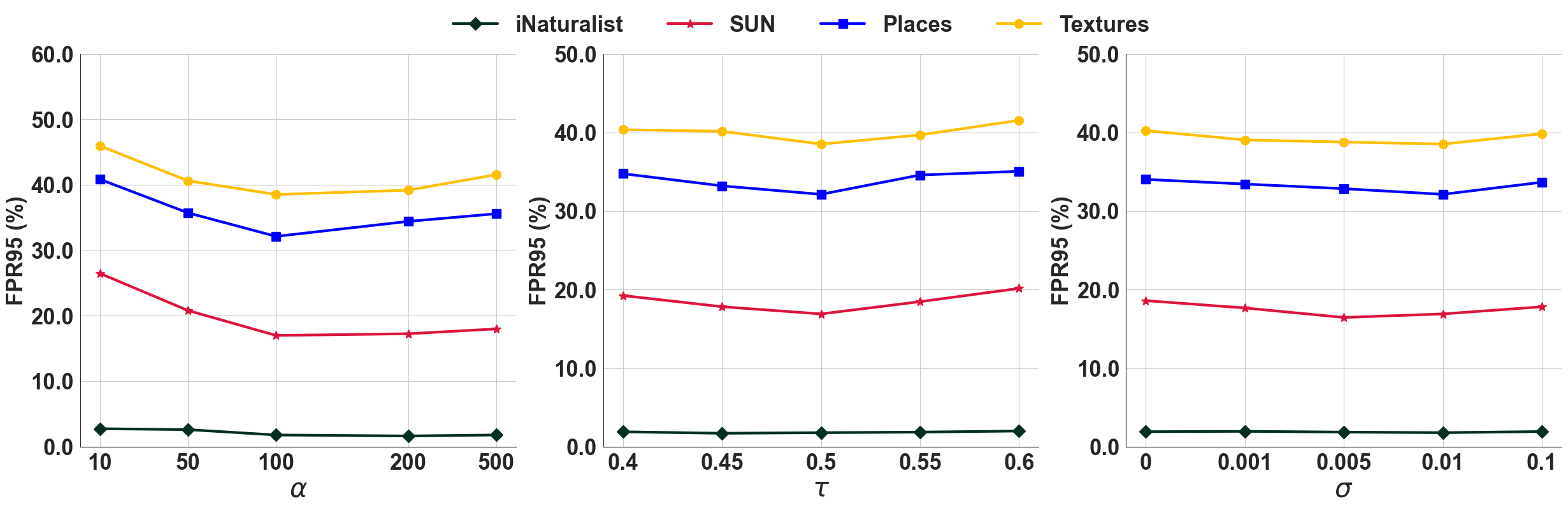}
  \caption{Hyper-parameter analysis on ImageNet-1K w.r.t. $\alpha$ (left), $\tau$ (middle), and $\sigma$ (right).}
\label{hyper}
\end{figure*}
\begin{figure}[h]
  \centering
\includegraphics[width=0.8\linewidth]{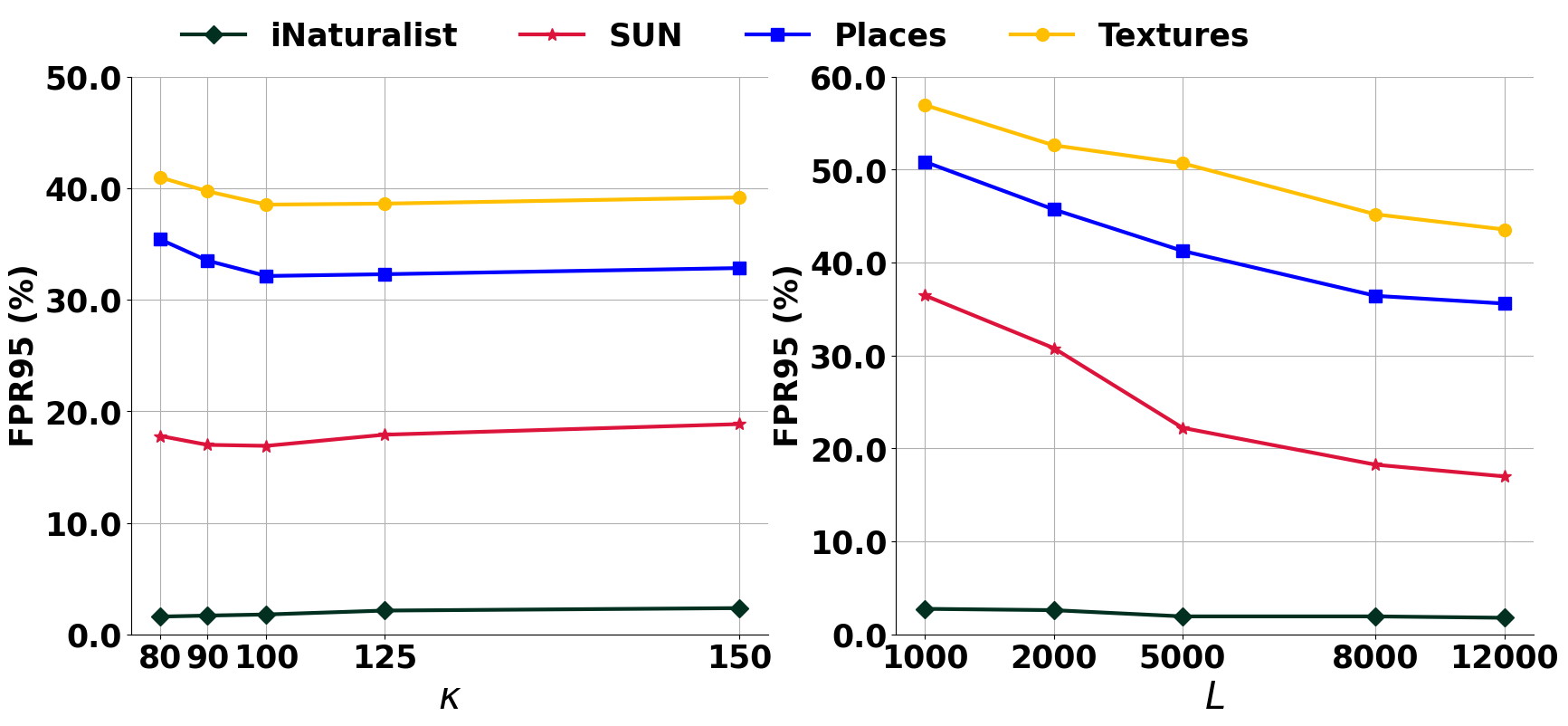}
  \caption{Hyper-parameter analysis results on ImageNet-1K w.r.t. $\kappa$ (left) and $L$ (right).}
\label{hyper2}
\end{figure}

\subsection{Ablation Study (RQ2)}
\label{ablation}
\subsubsection{CLIP Architectures} 
In principle, our method is generic to the choice of visual encoder. We evaluate our method with different visual encoder architectures, including ViT-B/32, ViT-L/14 and ResNet-50, and report the corresponding OOD detection results in Table~\ref{archi}. On the one hand, the performance of OOD detection can be enhanced by more powerful visual encoders. On the other hand, our method consistently outperforms the most recent NegLabel regardless of the backbone architecture used, which implies the better generalization of our method over NegLabel.
\subsubsection{Input Size.}
In principle, our method is generic to the input resolution. We evaluate our method with a larger input size, i.e., 336$\times$336, and report the corresponding OOD detection results in Table~\ref{input}. On the one hand, the performance of OOD detection can be enhanced by a larger input size. On the other hand, our method consistently outperforms the most recent NegLabel regardless of the input resolution, which implies the better generalization of our method over NegLabel.
\subsubsection{Corpus Sources} The role of the corpus is to provide a larger and more comprehensive semantic space. While our method is generic to the input resolution, we also conduct ablative analysis with different corpus sources, including Part-of-Speech Tags\footnote{https://www.kaggle.com/datasets/ruchi798/part-of-speech-tagging} and Common-20K\footnote{https://github.com/first20hours/google-10000-english}. As for Part-of-Speech Tags, we, following NegLabel~\cite{28}, randomly sample $T=70000$ words to simulate the wild dataset $\mathcal{D}$. It can be found that Table~\ref{corpus} that our method consistently outperforms NegLabel on multiple corpora, which implies that the flexibility of our method.
\subsection{Hyper-parameter Analysis (RQ3)} 
We evaluate the hyper-parameters most essential to our design, including $\alpha$ in Eq. (\ref{eq14}), prior probability $\tau$ in Eq. (\ref{eq5}), and the intensity of gaussian noise in positive sampling $\sigma$. The corresponding results are plotted in Figure~\ref{hyper}. It can found that our method is stable across a wide range of the hyper-parameters.
Besides, for completeness, we also evaluate the hyper-parameters orthogonal to to our design, including the temperature $\kappa$ and the length of sampled wild corpus labels. The corresponding results are plotted in Figure~\ref{hyper2}. Firstly, having a large or small value of $\kappa$ does not necessarily improve the performance, which is consistent with the observations in prior works~\cite{28} . Besides, OOD detection performance improves as $L$ increases, which empirically epochs our analysis in Theorem~\ref{thm2}.

\begin{table*}[h]
\begin{center}
\caption{OOD detection performance robustness to domain shift. All methods are based on CLIP-B/16. $\uparrow$ indicates larger values are better and vice versa. The best results in the last two columns are shown in bold.}
\resizebox{\textwidth}{!}{%
\begin{tabular}{ll cc cc cc cc cc}
\toprule
\multicolumn{1}{l}{\bf ID Dataset}
&\multicolumn{1}{l}{\bf \small{Method}}
&\multicolumn{2}{c}{\bf \small{iNaturalist}}
&\multicolumn{2}{c}{\bf \small{SUN}} 
&\multicolumn{2}{c}{\bf \small{Places}} 
&\multicolumn{2}{c}{\bf \small{Textures}} 
&\multicolumn{2}{c}{\bf \small{Average}}\\

&
&\small{AUROC$\uparrow$}&\small{FPR95}$\downarrow$ &\small{AUROC}$\uparrow$&\small{FPR95}$\downarrow$ &\small{AUROC}$\uparrow$&\small{FPR95}$\downarrow$ &\small{AUROC}$\uparrow$&\small{FPR95}$\downarrow$ &\small{AUROC}$\uparrow$&\small{FPR95}$\downarrow$ \\
\midrule
\multirow{2}{*}{ImageNet-S}
&MCM &87.74 &63.06 &85.35 &67.24 &81.19 &70.64 &74.77 &79.59 &82.26 &70.13\\
&NegLabel &99.34 &2.24 &94.93 &22.73 &90.78 &38.62 &89.29 &46.10  &93.59 &27.42\\
\rowcolor{LightCyan} &Ours &99.41 &1.63 &95.47 &19.82 &90.84 &68.46 &89.87 &44.43 &\textbf{93.90} &\textbf{26.09} \\
\midrule
\multirow{3}{*}{ImageNet-A}
&MCM &79.50 &76.85 &76.19 &79.78 &70.95 &80.51 &61.98 &86.37 &72.16 &80.88 \\
&NegLabel &98.80 &4.09 &89.83 &44.38 &82.88 &60.10 &80.25 &64.34 &87.94 &43.23\\
\rowcolor{LightCyan} &Ours &98.88 &3.39 &91.21 &36.32 &84.28 &55.12 &82.00 &60.14 &\textbf{89.09} &\textbf{38.74} \\
\midrule
\multirow{3}{*}{ImageNet-R}
&MCM &83.22 &71.51 &80.31 &74.98 &75.53 &76.67 &67.66 &83.72 &76.68 &76.72\\
&NegLabel &99.58 &1.60 &96.03 &15.77 &91.97 &29.48 &90.60 &35.67 &94.54 &20.63\\
\rowcolor{LightCyan} &Ours &99.57 &1.57 &96.43 &13.13 &92.27 &26.53 &91.07 &32.54 &\textbf{94.83} &\textbf{18.44} \\
\midrule
\multirow{3}{*}{ImageNetV2}
&MCM &91.79 &45.90 &89.88 &50.73 &86.52 &56.25 &81.51 &69.57 &87.43 &55.61\\
&NegLabel &99.40 &2.47 &94.46 &25.69 &90.00 &42.03 &88.46 &48.90 &93.08 &29.77\\
\rowcolor{LightCyan} &Ours &99.42 &2.08 &95.12 &21.14 &90.45 &38.47 &89.05 &45.00 &\textbf{93.51} &\textbf{26.67} \\
\bottomrule
\end{tabular}
}
\label{DG}
\end{center}
\end{table*}
\subsection{Domain-generalizable OOD Detection (RQ4)}
We investigate domain generalizable OOD detection scenarios, where there exist domain shifts in ID data. With ImageNet-1K as a case study, we, following~\citet{28}, consider ImageNet-A~\cite{69}, ImageNet-R~\cite{71} and ImageNetV2~\cite{70} as ID data receptively. The experiment results on four OOD datasets are shown in Table~\ref{DG}. It is apparent that the zero-shot OOD detection performance of MCM significantly deteriorates across diverse domain shifts, indicating the difficulty of zero-shot OOD detection under such conditions. NegLabel achieves remarkably better performances than MCM, thus demonstrating the significance of introducing negative labels for OOD detection. Even though, our method consistently outperform NegLabel cross diverse ID datasets, which implies stronger robustness of our method against domain shifts.

\section{Related work}
\subsection{Traditional Out-of-distribution Detection} 
The popularity of OOD detection is motivated by the empirical observation~\cite{12} that neural networks are over-confident in OOD data.
One line of work performs OOD detection by devising post-hoc scoring functions, including confidence-based methods~\cite{23,27,24}, energy-based methods~\cite{22,21}, distance-based approaches~\cite{18,15,14,19,25,16,20}, gradient-based approaches~\cite{13}, and Bayesian approaches~\cite{57,58}.
Another line of work addresses OOD detection by fine-tuning a pre-trained discrimination model with training-time regularizations that help the model learn ID/OOD discrepancy following the guideline of outlier exposure~\cite{59}.
For instance, the discriminative model is regularized to produce lower confidence~\cite{60,62}, smaller feature magnitudes~\cite{22} or higher energy~\cite{61} for outlier points. More recently, some works have considered a practical scenario where the auxiliary outliers can be arbitrarily different from the real OOD data, therefore distributionally augmenting the observed OOD data. Besides, the given OOD samples tend to include unlabeled ID counterparts~\cite{63}. Because of this, WOOD~\cite{63} formulates learning with noisy OOD samples as a constrained optimization problem while SAL~\cite{64} separates candidate outliers from unlabeled wild data and then trains a binary classifier using the candidate outliers and the labeled ID data.


\subsection{CLIP-based Out-of-distribution Detection}

The core of CLIP-based OOD detection lies in how to leverage texture supervision with pre-trained VLMs to assist OOD detection on the visual domain.
On the one hand, the pioneering work, MCM~\cite{27}, defines textual features as concept proto-types for each ID class and uses the scaled distance between visual features and the closest ID prototype to measure OOD uncertainty. Intead of relying  on textual information from only ID label space, ZOC \cite{26} applies VLMs to discern OOD instances by training a captioner that generates potential OOD labels. Nevertheless, this captioner often fails to produce effective OOD labels, particularly for ID datasets containing many classes. Differently, NegLabel~\cite{28} incorporates additional negative class names mined from available data sources as negative proxies. Considering the nonalignment between target visual OOD distribution and the generated negative textual OOD distribution, AdaNeg~\cite{40} leverages the benefits of test-time adaptation to generate adaptive proxies by exploring potential OOD images during testing. 
On the other hand, CLIP-based OOD detection can also be improved by prompt representation learning. In particular, LoCoOp~\cite{48} learns ID text prompts by pushing them away from the portions of CLIP local features that have ID-irrelevant nuisances (e.g., backgrounds). CLIPN~\cite{47} and LSN~\cite{49} design a learnable “no” prompt and a “no” text encoder to capture negation semantics within images. Differently, LAPT~\cite{50} initializes prompts with negative labels~\cite{28}, followed by tuning prompts with cross-modal and cross-distribution mixing. More recently, ~\citet{r} understand CLIP-based post-hoc OOD detection from an information-theoretical perspective.

\section{Conclusion}
The introduction of negative labels has found effective in enhancing OOD detection. However, existing methods often rely on heuristic rules to mine negative labels from unlabeled wild corpus data, which fail to correctly capture semantic relationships, and face difficulties in handling challenging false negatives. In this work, we propose to correct for the bias introduced by the common practice of sampling negative examples from the unreliable negative distribution.
Experimental results show that the proposed method achieves state-of-the-art performance on multiple OOD detection benchmarks and generalizes well across various VLM architectures and setups.
In the future, a interesting direction is to delve into the incorporation of pre-trained Multi-modal Large Language Models. 

\medskip

{
\small
\bibliographystyle{name}
\bibliography{ref}
}

\appendix

\section{Derivation of Theorem~\ref{thm1}}
\label{appendix b}
We plug in the decomposition $\mathbb{P}_Y^{-}=(\mathbb{Q}_Y-\tau\cdot\mathbb{P}_Y^{+})/(1-\tau)$ as follows:
\begin{align*}
    S_{\text{unbiased}}(\mathbf{x};f)
    &= \int\prod_{i=1}^m p_Y^-(\hat{y}_i) \Phi\left (\mathbf{x}, \left \{\hat{y}_i\right \}_{i=1}^m \right ) \prod_{i=1}^m \text{d}\hat{y}_i \\
    &= \int \prod_{i=1}^m \frac{q_Y(\hat{y}_i) - \tau\cdot p_Y^+(\hat{y}_i)}{1-\tau} \Phi\left (\mathbf{x}, \left \{\hat{y}_i\right \}_{i=1}^m \right )  \prod_{i=1}^m \text{d}\hat{y}_i \\
    &= \frac{1}{(1-\tau)^m}\int\prod_{i=1}^m \left(q_Y(\hat{y}_i) - \tau\cdot p_Y^+(\hat{y}_i) \right) \Phi\left (\mathbf{x}, \left \{\hat{y}_i\right \}_{i=1}^m \right ) \prod_{i=1}^m \text{d}\hat{y}_i.
\end{align*}
By the Binomial Theorem, the product inside the integral can be separated into $m+1$ groups corresponding to how many $\hat{y}_i$ are sampled from $\mathbb{Q}_Y$, i.e., 
\begin{align*}
    &\text{(1)}\quad\quad \prod_{i=1}^m q_Y(\hat{y}_i) \\
    &\text{(2)}\quad\quad \binom{m}{1} ( -\tau) p_Y^+(\hat{y}_1) \prod_{i=2}^m q_Y(\hat{y}_i) \\
    &\cdots \nonumber\\
    &\text{($k+1$)}\quad\quad \binom{m}{k} \prod_{j=1}^k (-\tau) p_Y^+(\hat{y}_i) \prod_{i=k+1}^m q_Y(\hat{y}_i) \\
    &\cdots \nonumber\\
    &\text{($m+1$)}\quad\quad \prod_{i=1}^m (-\tau) p_Y^+(\hat{y}_i)
\end{align*}
Finally, the objective $S_{\text{unbiased}}(\mathbf{x};f)$ becomes
\begin{equation*}
S_{\text{unbiased}}(\mathbf{x};f) = \sum_{k=0}^m
\expectunder[\substack{
\{\hat{y}_i\}_{i=1}^k\,\iidsim\,\mathbb{P}_Y^{+}  \\ 
\{\hat{y}_i\}_{i=k+1}^{m}\,\iidsim\,\mathbb{Q}_Y
    }]
{\binom{m}{k}
\frac{(-\tau)^k}{(1-\tau)^m}\cdot\Phi\left (\mathbf{x}, \left \{\hat{y}_i\right \}_{i=1}^m\right )}.
\end{equation*}
Please refer to the Inclusion–exclusion principle for more details. 



\section{Proof of theorem~\ref{thm2}}
\label{appendix c}
As a reminder, let
\begin{equation}
\notag
\delta(\mathbf{x};f)=\log\hat{S}_{\text{unbiased}}(\mathbf{x};f)-\log\hat{S}_{\text{debiased}}(\mathbf{x};f),
\end{equation}
where
\begin{equation}
\notag
\begin{split}
    & \hat{S}_{\text{debiased}}(\mathbf{x};f) \trieq \frac{\frac{1-\tau}{\lambda}\sum_{i=1}^K e^{h(\mathbf{x},y_i)}}{\frac{1-\tau}{\lambda}\sum_{i=1}^K e^{h(\mathbf{x},y_i)}+\frac{1}{m}\sum_{i=1}^{m}e^{h(\mathbf{x},u_i)}-\frac{\tau}{n}\sum_{i=1}^{n}e^{h(\mathbf{x},v_i)}}
\end{split}
\end{equation}
and
\begin{equation}
\notag
\begin{split}
\hat{S}_{\text{unbiased}}(\mathbf{x};f)
\triangleq\frac{\sum_{i=1}^K e^{h(\mathbf{x},y_i)}}{\sum_{i=1}^K e^{h(\mathbf{x},y_i)}+\lambda\mathbb{E}_{\hat{y}\sim\mathbb{P}_Y^{-}}[e^{h(\mathbf{x},{\hat{y}})}]}.
\end{split}
\end{equation}
For simplicity, we define 
$$
\Lambda(\mathbf{x},\{u_i\}_{i=1}^m,\{v_i\}_{i=1}^n)=\frac{1}{m(1-\tau)}\sum_{i=1}^{m}e^{h(\mathbf{x},u_i)}-\frac{\tau}{n(1-\tau)}\sum_{i=1}^{n}e^{h(\mathbf{x},v_i)}
$$ 

We start with decomposing the probability as
\begin{align*}
    &\mathbb{P}\bigg (\bigg | \delta(\mathbf{x};f) \bigg | \geq \zeta\bigg )  
    \\
   =&\mathbb{P} \bigg ( \bigg | \log\frac{\sum_{i=1}^K e^{h(\mathbf{x},y_i)}+\lambda\Lambda(\mathbf{x},\{u_i\}_{i=1}^m,\{v_i\}_{i=1}^n)}{\sum_{i=1}^K e^{h(\mathbf{x},y_i)}+\lambda\mathbb{E}_{\hat{y}\sim\mathbb{P}_Y^{-}}[e^{h(\mathbf{x},{\hat{y}})}]}  \bigg | \geq \zeta \bigg ) 
    \\
   =&\mathbb{P} \bigg ( \log\frac{\sum_{i=1}^K e^{h(\mathbf{x},y_i)}+\lambda\Lambda(\mathbf{x},\{u_i\}_{i=1}^m,\{v_i\}_{i=1}^n)}{\sum_{i=1}^K e^{h(\mathbf{x},y_i)}+\lambda\mathbb{E}_{\hat{y}\sim\mathbb{P}_Y^{-}}[e^{h(\mathbf{x},{\hat{y}})}]}  \geq \zeta \bigg )
    \\
    \quad&+ \mathbb{P} \bigg ( -\log\frac{\sum_{i=1}^K e^{h(\mathbf{x},y_i)}+\lambda\Lambda(\mathbf{x},\{u_i\}_{i=1}^m,\{v_i\}_{i=1}^n)}{\sum_{i=1}^K e^{h(\mathbf{x},y_i)}+\lambda\mathbb{E}_{\hat{y}\sim\mathbb{P}_Y^{-}}[e^{h(\mathbf{x},{\hat{y}})}]} \geq \zeta \bigg ),
\end{align*}
where the final equality holds simply because $|X| \geq \varepsilon$ if and only if $X \geq \varepsilon$ or $-X \geq \varepsilon$. 

The first term can be bounded as
\begin{align}
&\mathbb{P} \bigg ( \log\frac{\sum_{i=1}^K e^{h(\mathbf{x},y_i)}+\lambda\Lambda(\mathbf{x},\{u_i\}_{i=1}^m,\{v_i\}_{i=1}^n)}{\sum_{i=1}^K e^{h(\mathbf{x},y_i)}+\lambda\mathbb{E}_{\hat{y}\sim\mathbb{P}_Y^{-}}[e^{h(\mathbf{x},{\hat{y}})}]}  \geq \zeta \bigg ) \nonumber \\
\leq& \mathbb{P} \bigg (\frac{\lambda\Lambda(\mathbf{x},\{u_i\}_{i=1}^m,\{v_i\}_{i=1}^n)-\lambda\mathbb{E}_{\hat{y}\sim\mathbb{P}_Y^{-}}[e^{h(\mathbf{x},{\hat{y}})}]}{\sum_{i=1}^K e^{h(\mathbf{x},y_i)}+\lambda\mathbb{E}_{\hat{y}\sim\mathbb{P}_Y^{-}}[e^{h(\mathbf{x},{\hat{y}})}]} \geq \zeta \bigg ) \nonumber \\
=& \mathbb{P} \bigg (\frac{\Lambda(\mathbf{x},\{u_i\}_{i=1}^m,\{v_i\}_{i=1}^n)-\mathbb{E}_{\hat{y}\sim\mathbb{P}_Y^{-}}[e^{h(\mathbf{x},{\hat{y}})}]}{\frac{1}{\lambda}\sum_{i=1}^K e^{h(\mathbf{x},y_i)}+\mathbb{E}_{\hat{y}\sim\mathbb{P}_Y^{-}}[e^{h(\mathbf{x},{\hat{y}})}]} \geq \zeta \bigg ) \nonumber \\
\leq &\mathbb{P} \bigg (\Lambda(\mathbf{x},\{u_i\}_{i=1}^m,\{v_i\}_{i=1}^n)-\mathbb{E}_{\hat{y}\sim\mathbb{P}_Y^{-}}[e^{h(\mathbf{x},{\hat{y}})}]\geq \zeta e^{-\kappa}\bigg ).\label{a_eq_1}
\end{align}
The first inequality follows by applying the fact that  $\log x \leq x -1 $ for $x > 0$. The second inequality holds since 
$$ \frac{1}{\lambda}\sum_{i=1}^K e^{h(\mathbf{x},y_i)}+\mathbb{E}_{\hat{y}\sim\mathbb{P}_Y^{-}}[e^{h(\mathbf{x},{\hat{y}})}]\geq \mathbb{E}_{\hat{y}\sim\mathbb{P}_Y^{-}}[e^{h(\mathbf{x},{\hat{y}})}]\geq e^{-\kappa}.$$

Next, we move on to bounding the second term, which proceeds similarly, using the same two bounds.
\begin{align}
&\mathbb{P} \bigg ( -\log\frac{\sum_{i=1}^K e^{h(\mathbf{x},y_i)}+\lambda\Lambda(\mathbf{x},\{u_i\}_{i=1}^m,\{v_i\}_{i=1}^n)}{\sum_{i=1}^K e^{h(\mathbf{x},y_i)}+\lambda\mathbb{E}_{\hat{y}\sim\mathbb{P}_Y^{-}}[e^{h(\mathbf{x},{\hat{y}})}]} \geq \zeta \bigg ) \nonumber\\
=& \mathbb{P} \bigg ( \log\frac{\sum_{i=1}^K e^{h(\mathbf{x},y_i)}+\lambda\mathbb{E}_{\hat{y}\sim\mathbb{P}_Y^{-}}[e^{h(\mathbf{x},{\hat{y}})}]}{\sum_{i=1}^K e^{h(\mathbf{x},y_i)}+\lambda\Lambda(\mathbf{x},\{u_i\}_{i=1}^m,\{v_i\}_{i=1}^n)} \geq \zeta \bigg ) \nonumber \\
\leq& \mathbb{P} \bigg (\frac{ \lambda\mathbb{E}_{\hat{y}\sim\mathbb{P}_Y^{-}}[e^{h(\mathbf{x},{\hat{y}})}] - \lambda\Lambda(\mathbf{x},\{u_i\}_{i=1}^m,\{v_i\}_{i=1}^n) }{\sum_{i=1}^K e^{h(\mathbf{x},y_i)}+\lambda\Lambda(\mathbf{x},\{u_i\}_{i=1}^m,\{v_i\}_{i=1}^n)} \geq \zeta \bigg ) \nonumber \\
=& \mathbb{P} \bigg (\frac{\mathbb{E}_{\hat{y}\sim\mathbb{P}_Y^{-}}[e^{h(\mathbf{x},{\hat{y}})}] - \Lambda(\mathbf{x},\{u_i\}_{i=1}^m,\{v_i\}_{i=1}^n)}{\frac{1}{\lambda}\sum_{i=1}^K e^{h(\mathbf{x},y_i)}+\Lambda(\mathbf{x},\{u_i\}_{i=1}^m,\{v_i\}_{i=1}^n)} \geq \zeta \bigg ) \nonumber \\
\leq& \mathbb{P} \bigg ( \mathbb{E}_{\hat{y}\sim\mathbb{P}_Y^{-}}[e^{h(\mathbf{x},{\hat{y}})}] - \Lambda(\mathbf{x},\{u_i\}_{i=1}^m,\{v_i\}_{i=1}^n)  \geq \zeta e^{-\kappa} \bigg ) . \label{a_eq_2}
\end{align} 
Combining equation \eqref{a_eq_1} and equation \eqref{a_eq_2}, we have
\begin{align*}
\mathbb{P}(\delta_3 \geq \zeta) \leq \mathbb{P} \bigg ( \Big |\Lambda(\mathbf{x},\{u_i\}_{i=1}^m,\{v_i\}_{i=1}^n)-\mathbb{E}_{\hat{y}\sim\mathbb{P}_Y^{-}}[e^{h(\mathbf{x},{\hat{y}})}] \Big |  \geq \zeta e^{-\kappa} \bigg ).
\end{align*}
We then proceed to bound the right hand tail probability. Based on the triangle inequality, i.e., 
\begin{equation*}
\small
\begin{split}
&\Big |\Lambda(\mathbf{x},\{u_i\}_{i=1}^m,\{v_i\}_{i=1}^n)-\mathbb{E}_{\hat{y}\sim\mathbb{P}_Y^{-}}[e^{h(\mathbf{x},{\hat{y}})}] \Big | \\
=& \Big |\frac{1}{m(1-\tau)}\sum_{i=1}^{m}e^{h(\mathbf{x},u_i)}-\frac{\tau}{n(1-\tau)}\sum_{i=1}^{n}e^{h(\mathbf{x},v_i)}-\frac{1-\tau}{1-\tau}\mathbb{E}_{\hat{y}\sim\mathbb{P}_Y^{-}}[e^{h(\mathbf{x},{\hat{y}})}]\Big |\\
\leq& \frac{1}{1-\tau} \bigg |\frac{1}{m}\sum_{i=1}^{m}e^{h(\mathbf{x},u_i)}  - \mathbb{E}_{\hat{y}\sim\mathbb{Q}_Y}[e^{h(\mathbf{x},{\hat{y}})}] \bigg | \\
&\qquad\qquad\qquad\qquad\qquad +\frac{\tau}{1-\tau} \bigg  |\frac{1}{n}\sum_{i=1}^{n}e^{h(\mathbf{x},v_i)}-\mathbb{E}_{\hat{y}\sim\mathbb{P}_Y^{+}}[e^{h(\mathbf{x},{\hat{y}})}] \bigg |,
\end{split}
\end{equation*}
we find that 
\begin{align*}
\mathbb{P} \bigg ( \Big |\Lambda(\mathbf{x},\{u_i\}_{i=1}^m,\{v_i\}_{i=1}^n)-\mathbb{E}_{\hat{y}\sim\mathbb{P}_Y^{-}}[e^{h(\mathbf{x},{\hat{y}})}] \Big |  \geq \zeta e^{-\kappa} \bigg )  \leq \text{I} (\zeta) + \text{II} (\zeta),
\end{align*}
where 
\begin{align*}
\text{I} (\zeta)=  \mathbb{P} \left (  \frac{1}{1-\tau} \bigg |\frac{1}{m}\sum_{i=1}^{m}e^{h(\mathbf{x},u_i)}  - \mathbb{E}_{\hat{y}\sim\mathbb{Q}_Y}[e^{h(\mathbf{x},{\hat{y}})}] \bigg |   \geq \frac{\zeta e^{-\kappa}}{2} \right )
\end{align*}
and
\begin{align*}
\text{II}(\varepsilon)=  \mathbb{P} \left ( \frac{\tau}{1-\tau} \bigg  |\frac{1}{n}\sum_{i=1}^{n}e^{h(\mathbf{x},v_i)}-\mathbb{E}_{\hat{y}\sim\mathbb{P}_Y^{+}}[e^{h(\mathbf{x},{\hat{y}})}] \bigg | \geq \frac{\zeta e^{-\kappa}}{2} \right ).
\end{align*}
since $e^{h(\mathbf{x},y)}\in[e^{-\kappa},e^\kappa]$ is bounded $\forall y\in\mathcal{Y}$, the Hoeffding's inequality implies the following bound on the tails of both terms:
\begin{align*}
\text{I} (\varepsilon)   \leq  2 \exp \left ( - \frac{m \zeta^2 (1-\tau)^2}{2e^{3\kappa}} \right ) \quad \text{and} \quad \text{II}(\varepsilon)  \leq 2 \exp \left ( - \frac{n \zeta^2 (1-\tau)^2}{2\tau^2e^{3\kappa}} \right ).
\end{align*}

Note that implicitly, $\delta(\mathbf{x};f)$ depends on the collections $\{u_i\}_{i=1}^m$ and $ \{v_i\}_{i=1}^n$, we can rewrite the expectation of $\delta(\mathbf{x};f)$ with regard to the collections $\{u_i\}_{i=1}^m$ and $ \{v_i\}_{i=1}^n$ as the integral of its tail probability, i.e.,
\begin{align*}
\mathbb{E}_{\{u_i\}_{i=1}^m,\{v_i\}_{i=1}^n}\left [\delta(\mathbf{x};f) \right ]
&=\int _0 ^\infty \mathbb{P}(\delta(\mathbf{x};f) \geq \zeta) \text{d} \zeta\\
&\leq \int _0 ^\infty   2 \exp \left ( - \frac{N \zeta^2 (1-\tau)^2}{2e^{3\kappa}} \right )  \text{d} \zeta  + \int _0 ^\infty    2 \exp \left ( - \frac{M \zeta^2 (1-\tau)^2}{2\tau^2e^{3\kappa}} \right )  \text{d} \zeta\\
&= \frac{1}{1-\tau}\sqrt{\frac{\pi e^{3\kappa}}{2m}} + \frac{\tau}{1-\tau}\sqrt{\frac{\pi e^{3\kappa}}{2n}},
\end{align*}
where the last step is derived based on the classical identity
$$
\int_0^\infty e^{-c z^2} \text{d}z = \frac{1}{2}\sqrt{ \frac{\pi}{c}}.
$$

\end{document}